\newtheorem{lemma}{Lemma}
\newtheorem{prop}{Proposition}
\newtheorem{example}{Example}
\begin{document}

\begin{frontmatter}

\title{Clustering for high-dimension, low-sample size data using distance vectors}
\runtitle{Clustering for HDLSS data using distance vectors}


\author{\fnms{Yoshikazu} \snm{Terada}\corref{}\ead[label=e1]{terada@sigmath.es.osaka-u.ac.jp}}
\address{Graduate School of Engineering Science, Osaka University, 1-3 Machikaneyama,\\ Toyonaka, Osaka, Japan\\ \printead{e1}}
\affiliation{Osaka University}

\runauthor{Y. Terada}

\begin{abstract}
In  high-dimension, low-sample size (HDLSS) data, 
it is not always true that closeness of two objects reflects a hidden cluster structure.
We point out the important fact that it is not the closeness, 
but the ``{\it values}" of distance that contain 
information of the cluster structure in high-dimensional space.
Based on this fact, we propose an efficient and simple clustering approach, called distance vector clustering, for HDLSS data.
Under the assumptions given in the work of Hall et al. (2005),
we show the proposed approach provides a true cluster label under milder conditions
when the dimension tends to infinity with the sample size fixed.
The effectiveness of the distance vector clustering approach is illustrated through a numerical experiment and real data analysis.
\end{abstract}


\begin{keyword}
\kwd{clustering}
\kwd{high-dimension, low-sample size data}
\kwd{distance vectors}
\end{keyword}

\end{frontmatter}
%
\section{Introduction}\label{section:1}%
%

In various fields, 
discovering hidden homogeneous classes from high-dimension, low-sample size (HDLSS) data is of significant importance.
Many clustering methods for high-dimensional data have been proposed (e.g., Ahn et al., 2013; Liu et al., 2008;Witte and Tibsirani, 2010). 
One prevalent clustering method operates via variable selection (e.g., Witte and Tibsirani, 2010).
Conversely, Ahn et al. (2013) focused on how to measure the distance between clusters and proposed an efficient clustering method.
In this study, as in the case of Ahn et al. (2013), we focus on the means of measuring the distance between clusters.

Hall et al. (2005) prove the significant fact regarding the geometric representation of data points in HDLSS contexts.
Based on this fact, 
the closeness of the Euclidean distance depends on the mean and variance structure, and does not always contain hidden cluster information.
Thus, a classical clustering method does not always work well for high dimensional data.
For more details about
 the asymptotic behaviors of the classical hierarchical method for high-dimensional data, see Borysov et al. (2013).
Thus, we need a distance measure between clusters for HDLSS data that is more appropriate than the Euclidean distance.
The maximal data piling (MDP) distance (Ahn and Marron, 2010) is one possible choice for measuring the difference between clusters.
The MDP distance was proposed in the context of supervised learning, but we can also apply this distance measure to the case of unsupervised learning.
Ahn et al. (2013) proposed a hierarchical clustering method based on the MDP distance, called MDP clustering.
In Ahn et al. (2013) study, under certain conditions, MDP clustering can detect the difference between mean vectors of ``two" clusters
when the dimension tends to infinity with the sample size fixed.
In addition, 
Ahn et al. (2013) showed that we can approximate MDP clustering by a simple algorithm based on singular value decomposition.
These properties have proven to be quite useful for HDLSS data.

However, the sufficient condition for the label consistency of MDP clustering depends on the sample sizes and variances of the two clusters,
while MDP clustering only focuses on the difference between the mean vectors of two clusters.
Moreover, we cannot detect the differences between the variances of clusters.
In HDLSS contexts, 
there is some possibility that the difference between the variances of each cluster contains the cluster information.
In this study, 
we point out the important fact that it is not the closeness, 
but the ``{\it values}" of the Euclidean distance that contain information regarding the cluster structure in high-dimensional space.
Based on this fact, we propose an efficient and simple clustering approach, called distance vector clustering, for HDLSS data.
By the proposed approach, we can detect not only the differences between mean vectors of clusters but also the differences between the variances of clusters.
Moreover, the computational cost of the proposal approach increases linearly with the number of dimensions of data. 
Under the assumption given in the work of Hall et al. (2005),
we show that the proposed approach also gives the true cluster label under milder conditions
when the dimension tends to infinity with the sample size fixed.

This paper is organized as follows. 
In Section $\ref{section:2}$, some notation and preliminaries are described. 
Then, the difficulty of clustering HDLSS data by the usual method is presented, and the sufficient condition for label consistency of MDP clustering is discussed.
In Section $\ref{section:3}$, the main idea of the proposed method is described, and the algorithm of the distance vector clustering is proposed.
In Section $\ref{section:4}$, sufficient conditions for the asymptotic label consistency of the proposed approach are described.
In Sections $\ref{section:5}$ and $\ref{section:6}$, the effectiveness of the proposed approach is illustrated through a numerical experiment and real data analysis, respectively.

%
\section{Preliminaries and difficulty of clustering HDLSS data}\label{section:2}%
%
Let $K$ be the number of clusters, $N$ be the sample size, and $n_k$ be the sample size of the $k$-th cluster $(k=1,\dots,K)$.
That is, $N=\sum_{k=1}^Kn_k$.
$\bm{X}_k^{(p)}=(X_{k1},\dots,X_{kp})^T$ denotes the $p$-dimensional random vector for the $k$-th cluster $(k=1,\dots,K)$.
For $1\le k \le K$, let the independent and identically distributed (i.i.d.) sample points of the $k$-th cluster be denoted by $\bm{X}_{k1},\dots,\bm{X}_{kn_k}$.
As was done by Ahn et al. (2013), we also assume the following conditions in Hall et al. (2005):
\begin{description}
\item[\rm (a)] $p^{-1}\sum_{s=1}^{p}\mathbb{E}[X_{ks}]^2\rightarrow \mu_{k}^2\quad$ as $\quad p\rightarrow \infty$,
\item[\rm (b)] $p^{-1}\sum_{s=1}^{p}\mathrm{Var}[X_{ks}]^2\rightarrow \sigma_{k}^2\quad$ as $\quad p\rightarrow \infty$,
\item[\rm (c)] $p^{-1}\sum_{s=1}^{p}\{ \mathbb{E}[X_{ks}]^2 -\mathbb{E}[X_{ls}]^2 \}\rightarrow \delta_{kl}^2\quad$ as $\quad p\rightarrow \infty$,
\item[\rm (d)] There exists a permutation of variables, which is $\rho$-mixing for functions that are dominated by quadratics.
\end{description}
Moreover,
let $\eta_{kl}:=\lim_{p\rightarrow \infty}p^{-1}\sum_{s=1}^p \mathbb{E}[X_{ks}]\mathbb{E}[X_{ls}]$.
Under these assumptions, 
Hall et al. (2005) provides the following important facts.
\begin{prop}\label{prop:1} {\rm (Hall et al., 2005)}
Let $\bm{U}_1^{(p)}$ and $\bm{U}_2^{(p)}$ be sample points independently drawn from the distribution of $\bm{X}_1^{(p)}$.
As $p$ goes to infinity,
\begin{align*}
\text{\rm i) }\;  & \frac{1}{\sqrt{p}}\|\bm{X}_1^{(p)}\| \stackrel{\mathbb{P}}{\longrightarrow}\sqrt{\mu_1^2+\sigma_1^2}, & 
\text{\rm iv) }\; & \frac{1}{\sqrt{p}}\|\bm{X}_1^{(p)}-\bm{X}_2^{(p)}\| \stackrel{\mathbb{P}}{\longrightarrow}\sqrt{\delta_{12}^2+\sigma_1^2+\sigma_2^2},\\
\text{\rm ii) }\; & \frac{1}{\sqrt{p}}\|\bm{U}_1^{(p)}-\bm{U}_2^{(p)}\| \stackrel{\mathbb{P}}{\longrightarrow}\sqrt{2}\sigma_1,&
\text{\rm v) }\; &  \frac{1}{p}\langle \bm{X}_1^{(p)},\bm{X}_2^{(p)}\rangle \stackrel{\mathbb{P}}{\longrightarrow} \eta_{12}.\\
\text{\rm iii) }\; &  \frac{1}{p}\langle \bm{U}_1^{(p)},\bm{U}_2^{(p)}\rangle \stackrel{\mathbb{P}}{\longrightarrow} \mu_{1}^2,&
\end{align*}
where $\|\cdot\|$ and $\langle \cdot \rangle$ are the Euclidean norm and the inner product, respectively.
\end{prop}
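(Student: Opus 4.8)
The plan is to obtain all five limits from a single weak law of large numbers (WLLN) for $\rho$-mixing sequences, applied coordinate by coordinate. For each statement I would write the squared norm (or the inner product) as $p^{-1}\sum_{s=1}^{p}Z_s$ for a suitable sequence $(Z_s)$ formed from the coordinates, check that $p^{-1}\sum_{s=1}^{p}\mathbb{E}[Z_s]$ converges to the asserted limit using assumptions (a)--(c) and the definition of $\eta_{kl}$, and then invoke the WLLN to get $p^{-1}\sum_{s=1}^{p}(Z_s-\mathbb{E}[Z_s])\stackrel{\mathbb{P}}{\longrightarrow}0$. For i), ii) and iv) a final application of the continuous mapping theorem to $t\mapsto\sqrt{t}$ passes from the rescaled squared norm to the rescaled norm. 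Since each of these sums is invariant under permutations of the coordinates, I may assume the coordinate ordering is the one for which the $\rho$-mixing condition (d) holds.

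The concrete choices are: for i), $Z_s=X_{1s}^2$, with $\mathbb{E}[Z_s]=\mathrm{Var}[X_{1s}]+\mathbb{E}[X_{1s}]^2$, so the average of the means tends to $\sigma_1^2+\mu_1^2$; for ii), $Z_s=(U_{1s}-U_{2s})^2$, and since $\bm{U}_1^{(p)},\bm{U}_2^{(p)}$ are independent copies one gets $\mathbb{E}[Z_s]=2\,\mathrm{Var}[X_{1s}]$, hence limit $2\sigma_1^2$; for iii), $Z_s=U_{1s}U_{2s}$, with $\mathbb{E}[Z_s]=\mathbb{E}[X_{1s}]^2$ by independence, hence limit $\mu_1^2$; for iv), $Z_s=(X_{1s}-X_{2s})^2$ with the two clusters independent, so $\mathbb{E}[Z_s]=\mathrm{Var}[X_{1s}]+\mathrm{Var}[X_{2s}]+(\mathbb{E}[X_{1s}]-\mathbb{E}[X_{2s}])^2$ and the average tends to $\sigma_1^2+\sigma_2^2+\delta_{12}^2$; for v), $Z_s=X_{1s}X_{2s}$ with $\mathbb{E}[Z_s]=\mathbb{E}[X_{1s}]\mathbb{E}[X_{2s}]$, so the average tends to $\eta_{12}$ by definition. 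In each case the relevant functions of the coordinates are quadratics, hence ``dominated by quadratics'' in the sense of (d); and for the paired sequences $(U_{1s},U_{2s})$ and $(X_{1s},X_{2s})$ the joint sequence of coordinate pairs inherits $\rho$-mixing with the same coefficients, since independence between the two vectors does not worsen the mixing.

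The WLLN itself is the real content and the step I expect to be the main obstacle. The standard argument is a second-moment/Chebyshev estimate: writing $S_p=\sum_{s=1}^{p}Z_s$, one has $\mathrm{Var}(S_p)=\sum_{s}\mathrm{Var}(Z_s)+\sum_{i\neq j}\mathrm{Cov}(Z_i,Z_j)$, and the $\rho$-mixing covariance inequality $|\mathrm{Cov}(Z_i,Z_j)|\le\rho(|i-j|)\sqrt{\mathrm{Var}(Z_i)\,\mathrm{Var}(Z_j)}$, combined with $\rho(n)\to0$ and a uniform bound on $\mathrm{Var}(Z_s)$ (that is, on the fourth moments of the coordinates, which one takes implicitly alongside (a)--(c)), gives $\mathrm{Var}(S_p)=o(p^2)$; Chebyshev then yields $p^{-1}(S_p-\mathbb{E}[S_p])\stackrel{\mathbb{P}}{\longrightarrow}0$. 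This is precisely where hypothesis (d) enters, and making it rigorous --- checking that the mixing rate is strong enough to control the $O(p^2)$ covariance terms --- is the only non-bookkeeping part; the remainder is just assembling the mean computations above with the WLLN and, for i), ii) and iv), the continuous mapping theorem.
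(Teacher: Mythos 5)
Your proposal is correct and follows essentially the same route as the source: the paper states Proposition 1 without proof, citing Hall et al.\ (2005), and their argument is exactly the coordinate-wise decomposition you describe --- reduce each quantity to $p^{-1}\sum_s Z_s$ for a quadratic function $Z_s$ of the coordinates, identify the limit of the averaged means via (a)--(c) and the definition of $\eta_{12}$, control the fluctuations by the $\rho$-mixing covariance inequality under a uniform fourth-moment bound (implicit in Hall et al.), and finish with the continuous mapping theorem for the square root. Your mean computations for i)--v) are all correct (noting that conditions (b) and (c) as printed contain typos and should read $p^{-1}\sum_s\mathrm{Var}[X_{ks}]\to\sigma_k^2$ and $p^{-1}\sum_s(\mathbb{E}[X_{ks}]-\mathbb{E}[X_{ls}])^2\to\delta_{kl}^2$, which is how you have used them), and your observation that permutation invariance of the sums lets you pass to the mixing ordering in (d) is the right way to handle that hypothesis.
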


Based on this fact,
we obtain the sufficient condition for the label consistency of the classical hierarchical clustering method.
\begin{example} 
Let $\bm{U}_1^{(p)},\;\bm{U}_2^{(p)}$ and $\bm{U}_3^{(p)}$ be independent random vectors with the distribution of $\bm{X}_1^{(p)}$.
Let $\bm{V}^{(p)}$ be a random vector with the distribution of $\bm{X}_2^{(p)}$.
Here, we assume that $\bm{U}_1^{(p)},\;\bm{U}_2^{(p)},\;\bm{U}_3^{(p)}$ and $\bm{V}^{(p)}$ are mutually independent.
%
\begin{figure}\label{fig:1}
\begin{minipage}{0.47\textwidth}
\begin{center}
\includegraphics[scale=0.4]{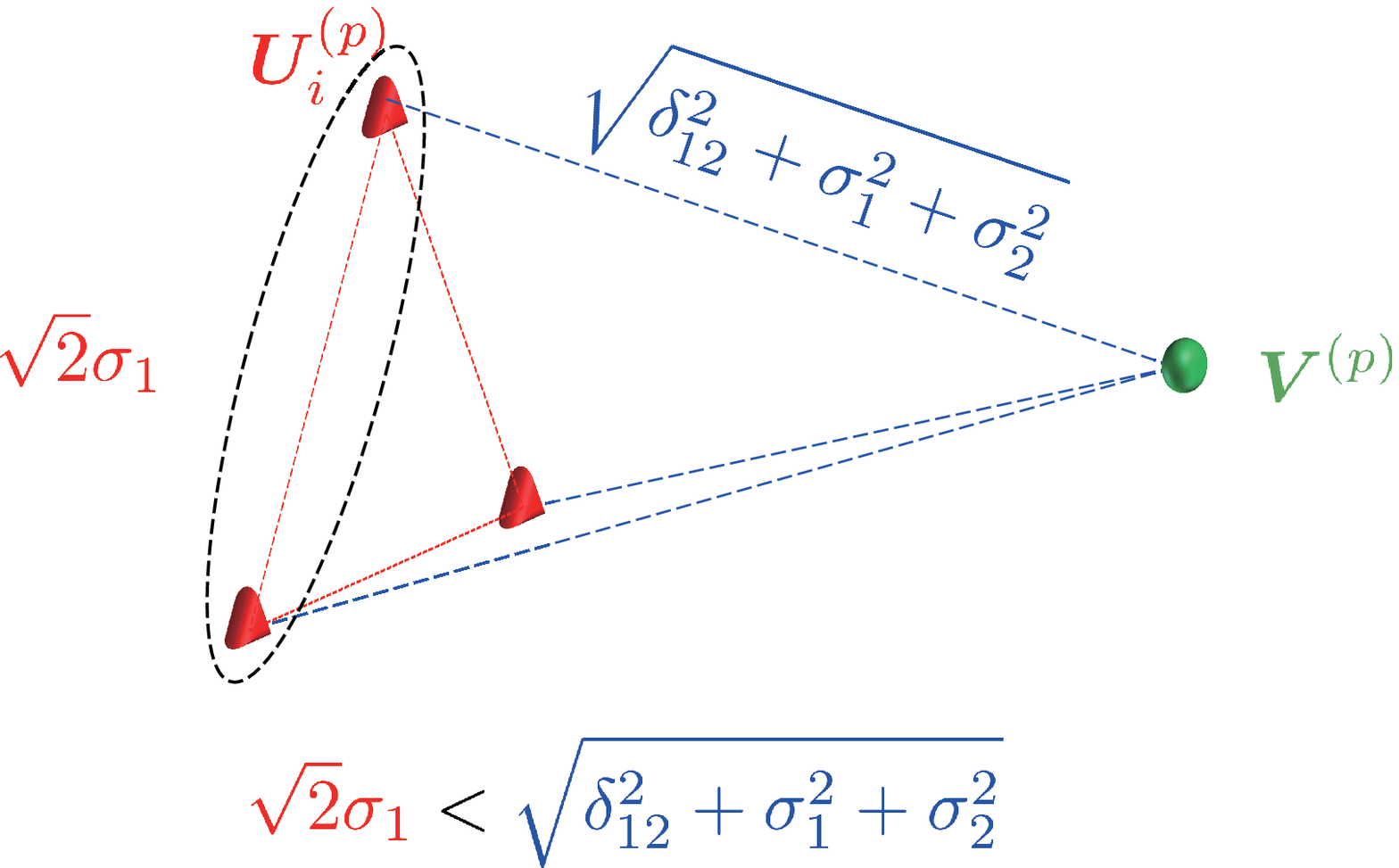}
\par{(a)}
\end{center}
\end{minipage}
\hfill
\begin{minipage}{0.47\textwidth}
\begin{center}
\includegraphics[scale=0.4]{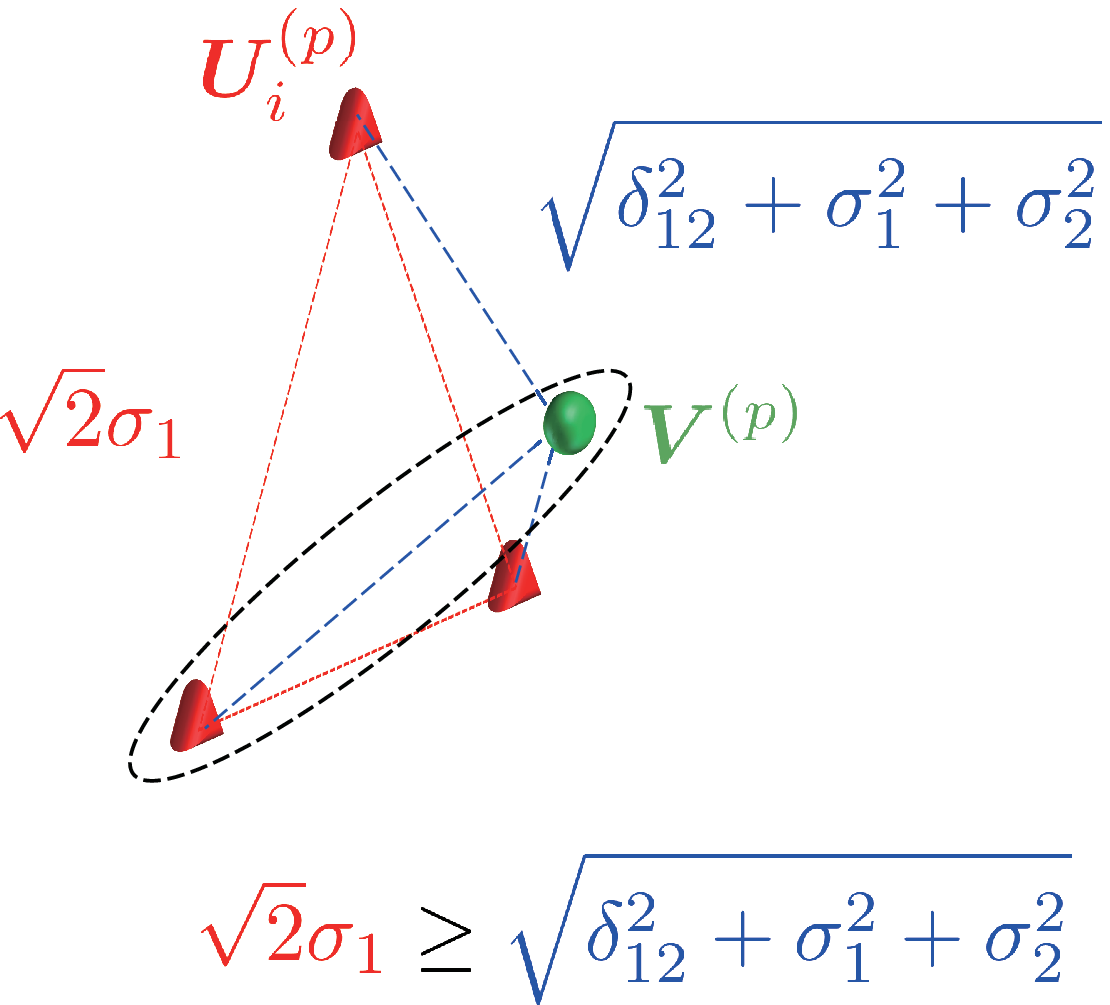}
\par{(b)}
\end{center}
\end{minipage}
\caption{Geometrical representations of $\bm{U}_1^{(p)},\;\bm{U}_2^{(p)},\;\bm{U}_3^{(p)}$ and $\bm{V}^{(p)}$ in the HDLSS contexts.}
\label{data}
\end{figure}
%
Figure $\ref{fig:1}$ shows geometric representations of these objects in the HDLSS context.
If $\sqrt{2}\sigma_1 < \sqrt{\delta_{12}^{2}+\sigma_1^2+\sigma_2^2}$, then
two objects in the same cluster, $\bm{U}_i^{(p)}$ and $\bm{U}_j^{(p)}\;(i\neq j)$, 
may be combined first in the classical hierarchical clustering.
On the other hand, 
if $\sqrt{2}\sigma_1 \ge \sqrt{\delta_{12}^{2}+\sigma_1^2+\sigma_2^2}$, then
two objects in different clusters, $\bm{U}_i^{(p)}$ and $\bm{V}^{(p)}\;(i\neq j)$, 
 may be combined first in classical hierarchical clustering.
Thus, the sufficient condition for the label consistency of classical hierarchical clustering is given by 
$$\sqrt{2}\sigma_1 < \sqrt{\delta_{12}^{2}+\sigma_1^2+\sigma_2^2}.$$
\end{example}

These facts indicate that in HDLSS contexts, the closeness of two objects may not reflect 
the hidden true cluster structure.
Thus, 
Ahn et al. $(2013)$ proposed a clustering method using the maximal data piling distance, called MDP clustering.
The MDP distance between two clusters is defined as the orthogonal distance between
the affine subspaces generated by the sample points in each cluster.
MDP clustering finds successive binary splits, each of which creates two
clusters in such a way that the MDP distance between them is as large as possible.
In Ahn et al. (2013), 
the sufficient condition for label consistency of MDP clustering is given by
\begin{align}
\delta_{12}^2 + \frac{\sigma_1^2}{n_1} + \frac{\sigma_2^2}{n_2}
>
\max
\left\{
\frac{n_1+G}{n_1G}\sigma_1^2
+
\frac{n_2+G}{n_2G}\sigma_2^2
\right\},
\end{align}
where $G\le \min\{n_1,n_2\}$.
Since it is difficult to understand this condition directly,
we consider two specific cases under the conditions $n_1=n_2\;(=:n)$.
First, we consider the case where there is no clear difference between two mean vectors, that is, $\delta_{12}=0$.
In this case, 
the sufficient condition is given by
$$
\frac{\sigma_1^2}{n} + \frac{\sigma_2^2}{n}
>\frac{n+G}{nG}\max\{\sigma_1^2,\sigma_2^2\},
$$
but we have $\sigma_1^2 + \sigma_2^2\le \{(n+G)/G\}\max\{\sigma_1^2,\sigma_2^2\}$ for all $n\in \mathbb{N}$ and the sufficient condition cannot hold.
Thus, we cannot detect the difference between variances of two clusters by MDP clustering.
Next, we consider the case that there is a clear difference between two mean vectors, that is, $\delta_{12}>0$.
In this case, 
the sufficient condition is given by
$$
\delta_{12}^2+\frac{\sigma_1^2}{n} + \frac{\sigma_2^2}{n}
>\frac{n+G}{nG}\max\{\sigma_1^2,\sigma_2^2\}.
$$
For simplicity, we assume that $\sigma:=\sigma_1=\sigma_2$.
If $n\delta_{12}>\{(n-G)/nG\}\sigma^2$, the sufficient condition holds.
Thus, when the difference between mean vectors of two clusters is sufficiently large,
we can discover the true cluster structure.
Consequently, MDP clustering focuses on the difference between the mean vectors of two clusters. 

%
\section{Distance vector clustering}\label{section:3}%
%
\subsection{Main idea and algorithm of distance vector clustering}
In HDLSS data, 
there is some possibility that the differences between variances of each cluster contain the cluster information.
Moreover, 
the sufficient condition of MDP clustering depends on the variances and the sample sizes of two clusters,
whereas MDP clustering focuses on the difference between the mean vectors of two clusters.

In this section, 
we propose a simple and efficient clustering approach based on the usual distance (or inner product) matrix.
Here, 
we first describe the main idea of our approach.
\begin{example}\label{example:2}
Let $\bm{X}_1^{(p)}$ be a sample point drawn from the standard $p$-dimensional normal distribution $N_p(\bm{0},I_p)$.
For fixed $c\neq1$, let $\bm{X}_2^{(p)}$ be a sample point drawn from $N_p(\bm{0},cI_p)$.
Let $\bm{U}_i\;(i=1,\cdots,10)$ be i.i.d. copies of  $\bm{X}_1^{(p)}$ and 
$\bm{V}_i\;(i=1,\cdots,10)$ be i.i.d. copies of  $\bm{X}_2^{(p)}$.
Write $X:=(\bm{U}_1,\dots,\bm{U}_{10},\bm{V}_1,\dots,\bm{V}_{10})^T$.
In this setting, 
the condition of MDP clustering does not hold.
We compute the distance matrix for the data matrix $X$.
Figure $\ref{fig:2}$ shows heatmaps of the distance matrices for various numbers of dimensions.
From this figure, 
we can see that the contrast of the distance matrix between two clusters becomes apparent 
with increasing number of dimensions.
%
\begin{figure}[h]
\label{fig:2}
\begin{center}
\includegraphics[scale=0.185]{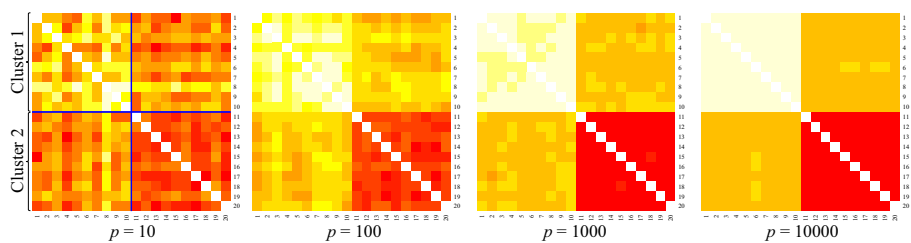}
\end{center}
\caption{Heatmaps of the distance matrices for various number of dimensions ($p=10,\;100\;1000,\;10000$).}
\end{figure}
%
\end{example}

Example $\ref{example:2}$ indicates that 
{\it 
 in HDLSS contexts the closeness between data points may not be meaningful, 
   but “values” of distance contain the true cluster information.
}
Based on this fact, 
we propose the following clustering algorithm:
\begin{description}
\item[Step $1$. ] Compute the usual Euclidean distance matrix $D:=(d_{ij}^{(p)})_{N\times N}$ (or the inner product matrix $S:=XX^T$) from the centered data matrix $X:=(x_{is})_{N\times p}$. 
\item[Step $2$. ] Compute the following distance matrix $\Xi:=(\xi_{ij}^{(p)})_{N\times N}$ from the matrix $D$ (or $S$).
$$
\xi_{ij}^{(p)}=\sqrt{\sum_{t\neq i, j} (d_{it}^{(p)}-d_{jt}^{(p)})^2}\quad \left(\text{ or }= \sqrt{\sum_{t\neq i, j} (s_{it}^{(p)}-s_{jt}^{(p)})^2}\right)
$$ 
\item[Step $3$. ] For the matrix $\Xi$, apply a usual clustering method (e.g., Ward's method).
\end{description}
Figure $\ref{fig:3}$ shows the flow of this algorithm using the inner product matrix $S:=XX^T$ and Ward's method.
%
\begin{figure}[h]
\begin{center}
\includegraphics[scale=0.26]{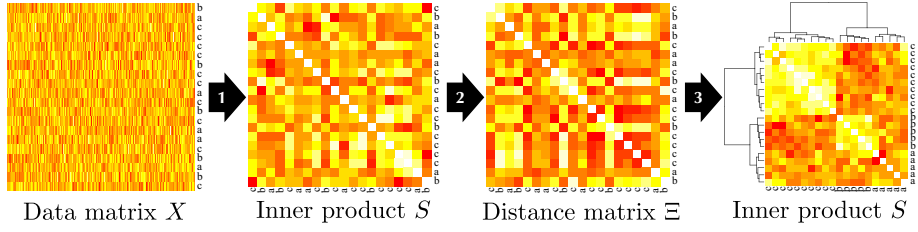}
\end{center}
\caption{Flow of the proposed algorithm using the inner product matrix $S$ and Ward's method.}
\label{fig:3}
\end{figure}
%

\subsection{Theoretical properties}
In this section, we prove the label consistency of this algorithm with a conventional clustering method under the assumption given by Hall et al. (2005).
First, we prove the label consistency for the $k$-means type algorithm.
\subsubsection{$K$-means type}
The objective function of the $k$-means type distance vector clustering method is given by
$$
Q(\mathcal{C}_K\mid K):=
\sum_{i=1}^N\min_k\sum_{j\neq i} (d_{ij}^{(p)}-\bar{d}_{kj}^{(p)})^2
\quad \left(\text{ or }
=\sum_{i=1}^N\min_k\sum_{j\neq i} (s_{ij}^{(p)}-\bar{s}_{kj}^{(p)})^2
\right),
$$
where $\mathcal{C}_K=\{C_1,\dots,C_K\}$ is a partition of objects,
$$
\bar{d}_{kj}^{(p)}=\frac{1}{n_k-1}\sum_{i\neq j, i \in C_k}d_{ij}^{(p)}
\quad \text{and} \quad
\bar{s}_{kj}^{(p)}=\frac{1}{n_k-1}\sum_{i\neq j, i \in C_k}s_{ij}^{(p)}.
$$
We can optimize this function by the usual $k$-means algorithm (e.g., Lloyd's algorithm).

From Proposition $\ref{prop:1}$, we can obtain the following result. 
%
\begin{lemma}\label{lemma:1}
Let $K$ be the true number of clusters.
Under the general assumptions a) - d),
for an arbitrary $K^\ast \ge K$,
$$
\min_{\mathcal{C}_{K^\ast}}Q(\mathcal{C}_{K^\ast}\mid K^\ast)
 \stackrel{\mathbb{P}}{\longrightarrow} 0
 \quad
 \text{as}
 \quad
 p\rightarrow \infty.
$$
\end{lemma}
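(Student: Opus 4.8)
The plan is to transport the whole problem to the deterministic limiting geometry supplied by Proposition~\ref{prop:1}, using crucially that $N$ — and hence the number of objects, of pairwise distances, and of candidate partitions — stays fixed as $p\to\infty$.

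First I would pass to the natural HDLSS scale, writing $\hat d_{ij}:=p^{-1/2}d_{ij}^{(p)}$ (and $\hat s_{ij}:=p^{-1}s_{ij}^{(p)}$ in the inner-product variant); this rescaling does not change which partition the algorithm outputs, and the pairwise distances are in any case unaffected by the centering in Step~1. Because only finitely many pairs occur, parts ii) and iv) of Proposition~\ref{prop:1}, applied to within- and to between-cluster pairs, give the joint convergence in probability $\hat D=(\hat d_{ij})_{N\times N}\stackrel{\mathbb P}{\longrightarrow}D^\ast$, where $D^\ast=(d^\ast_{ij})$ is the deterministic block matrix with $d^\ast_{ij}=\sqrt2\,\sigma_k$ for $i\neq j$ in the same true cluster $k$, with $d^\ast_{ij}=\sqrt{\delta_{kl}^2+\sigma_k^2+\sigma_l^2}$ for $i$ in cluster $k$ and $j$ in cluster $l\neq k$, and with $d^\ast_{ii}=0$; an identical argument (using parts iii), v) for the off-diagonal entries of $p^{-1}XX^{T}$, part i) for the diagonal, and the fact that centering is a fixed linear operation on the Gram matrix) gives $\hat S\stackrel{\mathbb P}{\longrightarrow}S^\ast$ with off-diagonal blocks $\mu_k^2$ within and $\eta_{kl}$ between. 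I use here that marginal convergence in probability of a finite family to constants is the same as joint convergence.

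Second, I would note that, for each fixed partition, $\sum_i\min_k\sum_{j\neq i}(\hat d_{ij}-\hat{\bar d}_{kj})^2$ is a finite composition of linear forms in the entries of $\hat D$ (the prototype coordinates $\hat{\bar d}_{kj}$ being such forms), squarings, finite sums and a finite minimum over $k$, hence a continuous function of $\hat D$; its minimum over the finitely many partitions of $\{1,\dots,N\}$ into $K^\ast$ groups is therefore also continuous in $\hat D$. By the continuous mapping theorem,
$$
\min_{\mathcal C_{K^\ast}}\sum_i\min_k\sum_{j\neq i}\bigl(\hat d_{ij}-\hat{\bar d}_{kj}\bigr)^2
\;\stackrel{\mathbb P}{\longrightarrow}\;
\min_{\mathcal C_{K^\ast}}\sum_i\min_k\sum_{j\neq i}\bigl(d^\ast_{ij}-\bar d^\ast_{kj}\bigr)^2 ,
$$
the right-hand side being the same functional evaluated at the deterministic $D^\ast$.

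Third, and this is the only step with content, I would exhibit a partition into $K^\ast$ groups on which the functional at $D^\ast$ equals $0$: for $K^\ast=K$ the true partition, and for $K^\ast>K$ any refinement of it into $K^\ast$ nonempty groups, each of which then lies inside a single true cluster. The key observation is that in $D^\ast$ the distance vector of an object $i$ in true cluster $k$ is constant along each true-cluster block of the remaining coordinates $j\neq i$ — equal to $\sqrt2\,\sigma_k$ on the rest of cluster $k$ and to $\sqrt{\delta_{kl}^2+\sigma_k^2+\sigma_l^2}$ on cluster $l$ — so any two objects in the same group share that limiting distance vector on the coordinates entering the objective, each object coincides with its own group's prototype, and its contribution vanishes; hence the right-hand side above is $0$ and the lemma follows. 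The main thing to be careful about is precisely this verification: one must check that the leave-one-out prototype $\bar d^\ast_{kj}=(n_k-1)^{-1}\sum_{i\in C_k,\,i\neq j}d^\ast_{ij}$ reproduces the common within-block value of $D^\ast$ in the limit — this is where the exact normalization matters, and also why the distances have to be read on the $\sqrt p$ scale for the conclusion to be ``$\to 0$'' — and, when $K^\ast>K$, that such a $K^\ast$-group refinement is available with well-defined prototypes. Everything else is immediate from Proposition~\ref{prop:1} and the finiteness of $N$.
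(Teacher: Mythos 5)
Your argument is correct and in fact supplies all the detail that the paper omits: the paper's own proof of Lemma~\ref{lemma:1} consists of the single sentence ``The proof is straightforward.'' The route you take --- joint convergence of the finitely many pairwise distances (or inner products) to a deterministic block matrix via Proposition~\ref{prop:1}, continuity of the objective in those entries, and exhibiting a refinement of the true partition on which the limiting objective vanishes --- is the natural argument the paper evidently intends. The two caveats you flag are real, but they are defects of the paper's statement rather than gaps in your reasoning: the conclusion $\min Q \to 0$ only holds if $d_{ij}^{(p)}$ and $s_{ij}^{(p)}$ are read on the $p^{-1/2}$ and $p^{-1}$ scales (the limits quoted in the proof of Proposition~\ref{prop:2} confirm the author uses these scales implicitly), and the prototype $\bar d_{kj}^{(p)}$ must be understood as a proper average over the terms actually summed --- with the literal $(n_k-1)^{-1}$ normalization applied when $j \notin C_k$ the limiting prototype coordinate is $n_k/(n_k-1)$ times the common within-block value, the contribution of object $i$ would not vanish, and the lemma as literally written would fail. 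Under the intended reading your verification in the third step goes through exactly as you describe.
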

\begin{proof}
The proof is straightforward.
\end{proof}
Based on Lemma $\ref{lemma:1}$, 
we obtain the sufficient condition for the label consistency of the distance (or inner product) vector clustering approach.
%
\begin{prop}\label{prop:2}
We assume the general assumptions a) - d) and and also assume that $n_k\ge 2\;(k=1,\dots,K)$.
Suppose that the true number of clusters $K$ is given.
\begin{description}
\item[a) ]
If $\forall k,l\;(k\neq l); \sigma_k\neq \sigma_l$ or $\delta_{kl}^2>0$, 
then the estimated cluster label vector with the $k$-means type distance vector clustering method based on the distance matrix $D$
 converges to the true label vector in probability as $p\rightarrow \infty$.
\item[b) ]
Moreover, 
if $\forall k,l\;(k\neq l);\delta_{kl}^2>0$, then 
then the estimated cluster label vector with the $k$-means type distance vector clustering method based on the inner product matrix $S$
 converges to the true label vector in probability as $p\rightarrow \infty$.
\end{description}
\end{prop}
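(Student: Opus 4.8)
The plan is to derive the statement from Proposition~\ref{prop:1} and Lemma~\ref{lemma:1} by a separation estimate. The only probabilistic ingredient needed is that, after rescaling, the pairwise quantities entering $Q$ converge in probability to constants depending only on the true cluster memberships; everything else is a deterministic analysis of the limiting configuration. Concretely, I would reduce the claim to two assertions: (1) for every fixed partition $\mathcal{C}_K$ the rescaled objective $p^{-1}Q(\mathcal{C}_K\mid K)$ converges in probability to a constant $q(\mathcal{C}_K)\ge 0$, with $q$ equal to $0$ at the true partition; and (2) $q(\mathcal{C}_K)>0$ whenever $\mathcal{C}_K$ is not the true partition. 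Given (1) and (2), Lemma~\ref{lemma:1} applied with $K^\ast=K$ gives $p^{-1}Q(\hat{\mathcal{C}}_K\mid K)\stackrel{\mathbb{P}}{\longrightarrow}0$ for any minimizer $\hat{\mathcal{C}}_K$, and since there are only finitely many partitions, $\mathbb{P}(\hat{\mathcal{C}}_K=\text{true partition})\to 1$, which is the assertion of the proposition.

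For (1) I would fix $i\neq j$, split $\sum_{t\neq i,j}(d_{it}^{(p)}-d_{jt}^{(p)})^2$ (and the blocks $\sum_{j\neq i}(d_{ij}^{(p)}-\bar{d}_{kj}^{(p)})^2$) according to the true cluster of the summation index, and apply Proposition~\ref{prop:1} ii), iv) together with the continuous mapping theorem, using that finitely many scalars converging in probability to constants converge jointly. When $i$ and $j$ lie in the same true cluster every summand tends to $0$, so the contribution of $i$ to $Q$ at the true partition vanishes in the limit, which recovers Lemma~\ref{lemma:1}. When $\kappa(i)=k\neq l=\kappa(j)$, the hypothesis $n_k,n_l\ge 2$ guarantees a third index $t$ in cluster $k$ and one in cluster $l$; their contributions to $p^{-1}(\cdot)$ tend to $\bigl(\sqrt{2}\,\sigma_k-\sqrt{\delta_{kl}^2+\sigma_k^2+\sigma_l^2}\bigr)^2$ and $\bigl(\sqrt{\delta_{kl}^2+\sigma_k^2+\sigma_l^2}-\sqrt{2}\,\sigma_l\bigr)^2$ respectively, and a short calculation shows these two numbers are zero simultaneously exactly when $\delta_{kl}=0$ and $\sigma_k=\sigma_l$. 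Thus assumption a) forces one of them to be positive, so the limiting distance-vectors attached to clusters $k$ and $l$ are distinct. For the inner-product matrix the same scheme runs with $s_{it}^{(p)}$ in place of $d_{it}^{(p)}$, invoking iii) and v) and tracking the centering of $X$ through Proposition~\ref{prop:1}; there the relevant limits involve only the mean structure, so distinctness of the limiting vectors is what assumption b) ($\delta_{kl}^2>0$ for all pairs) is needed for.

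The heart of the argument, and the step I expect to be the main obstacle, is (2). First a pigeonhole remark: if every block of $\mathcal{C}_K$ were contained in a single true cluster, then, since there are $K$ blocks and $K$ non-empty true clusters (each with $n_k\ge 2$ points) and every true cluster must be covered, $\mathcal{C}_K$ would be forced to coincide with the true partition. So any $\mathcal{C}_K$ that is not the true partition has a block $B$ meeting two distinct true clusters, and one then wants to conclude $q(\mathcal{C}_K)>0$. This does not follow merely from the separation $c_{kl}>0$ established in Step (1): for special parameter configurations the limiting centroid of a mixed block could coincide with the limiting representation of a pure cluster, which would make that block cost nothing. Ruling this out requires writing out the limiting centroids explicitly and exploiting the specific form of $Q$, in particular the exclusion of the self-distances $d_{jj}^{(p)}=0$, so that the within-cluster pattern $\sqrt{2}\,\sigma_\kappa$ in the representation of a point from cluster $\kappa$ cannot be reproduced by the centroid of any block that is not made entirely of cluster-$\kappa$ points. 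Carrying this out carefully, together with the centering computation for the inner-product version in Step (1), is where essentially all the work lies.
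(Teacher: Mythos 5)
Your computations coincide with the paper's: the pairwise limits $d_{it}^{(p)}-d_{jt}^{(p)}\to\sqrt{2}\sigma_k-\sqrt{\delta_{kl}^2+\sigma_k^2+\sigma_l^2}$ and $\sqrt{\delta_{kl}^2+\sigma_k^2+\sigma_l^2}-\sqrt{2}\sigma_l$, the observation that both vanish only when $\delta_{kl}=0$ and $\sigma_k=\sigma_l$ (and the analogous mean-only computation for $S$), and the appeal to Lemma~\ref{lemma:1} are exactly the paper's proof. Where you diverge is in what you demand afterwards: you insist on showing that $q(\mathcal{C}_K)>0$ for every non-true partition, you correctly observe that this does not follow from pairwise separation of the limiting representations alone (a mixed block's centroid can in principle coincide with a pure cluster's representation), and you then stop, declaring that "essentially all the work" lies in ruling this out. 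So as written your proposal is incomplete at the step you yourself identify as the heart of the matter. To be fair, the paper is no more complete: it passes from "within-cluster $\xi_{ij}^{(p)}\to 0$, between-cluster $\xi_{ij}^{(p)}\to c_{kl}>0$" directly to label consistency with the single sentence "From Lemma~\ref{lemma:1} and this fact, we obtain the label consistency," which elides precisely the deduction you are worried about.

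The gap can, however, be closed without the centroid bookkeeping you anticipate, and this is the argument you are missing. Because the objective has the form $\sum_i\min_k\sum_{j\neq i}(d_{ij}^{(p)}-\bar{d}_{kj}^{(p)})^2$, a partition with small cost forces every point to lie close (in the distance-vector representation) to \emph{some} centroid; by the triangle inequality, any two points assigned to the same nearest centroid then satisfy $\xi_{ij}^{(p)}\le\bigl(\sum_{t\neq i,j}(d_{it}^{(p)}-\bar{d}_{a(i)t}^{(p)})^2\bigr)^{1/2}+\bigl(\sum_{t\neq i,j}(d_{jt}^{(p)}-\bar{d}_{a(j)t}^{(p)})^2\bigr)^{1/2}=o_{\mathbb{P}}(\sqrt{p})$, which is incompatible with the between-cluster lower bound $\xi_{ij}^{(p)}\ge(c_{kl}-\epsilon)\sqrt{p}$. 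Hence with high probability the nearest-centroid assignment map separates the $K$ true clusters and, by pigeonhole, reproduces the true partition --- no analysis of where mixed-block centroids land is needed, because all that is used is that two points near the same centroid are near each other. (One still has to say a word about why the minimizing \emph{partition}, rather than its induced assignment map, is the true one; since reassigning each point to its nearest centroid does not increase $Q$, a minimizer can always be taken in assignment form.) In short: your proof matches the paper's up to its last line, your scepticism about that last line is well founded, but the resolution is an elementary triangle-inequality argument rather than the explicit limiting-centroid computation you propose, and you should supply one or the other before the proof is complete.
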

\begin{proof}
Let $\mathcal{C}_K:=\{C_1,\dots,C_K\}$ be the true cluster partition.
From Proposition \ref{prop:1}, for $i,j\in C_k$ we have $\xi_{ij}^{(p)} \stackrel{\mathbb{P}}{\longrightarrow}  0$ as $p\rightarrow \infty$.
For $i,t\in C_k\;(i\neq t)$ and $j\in C_{l}\;(k\neq l)$, as $p\rightarrow \infty$,
$$
d_{it}^{(p)}-d_{jt}^{(p)}\stackrel{\mathbb{P}}{\longrightarrow}\sqrt{2}\sigma_k-\sqrt{\delta_{kl}^2+\sigma_k^2+\sigma_l^2}.
$$
Conversely, for $i\in C_k$ and $j,t\in C_{l}\;(j\neq t;\;k\neq l)$, as $p\rightarrow \infty$,
$$
d_{it}^{(p)}-d_{jt}^{(p)}\stackrel{\mathbb{P}}{\longrightarrow}\sqrt{\delta_{kl}^2+\sigma_k^2+\sigma_l^2}-\sqrt{2}\sigma_l.
$$
For $i \in C_k$ and $j \in C_l\;(k\neq l)$, 
if $\xi_{ij}^{(p)}\stackrel{\mathbb{P}}{\longrightarrow}  0$ as $p\rightarrow \infty$, then we have 
$$
(\sigma_k-\sigma_l)^2=-\delta_{kl}^2, 
$$
which contradicts the assumption $\sigma_k\neq \sigma_l$ or $\delta_{kl}^2>0$.
Thus, we obtain the condition where $\xi_{ij}^{(p)}$ converges in probability to some positive constant for $i \in C_k$ and $j \in C_l\;(k\neq l)$. 
From Lemma $\ref{lemma:1}$ and this fact, we obtain the label consistency of the $k$-means type distance vector clustering method based on the distance matrix $D$.

Next, we consider the $k$-means type clustering based on the inner product matrix $S$.
From Proposition \ref{prop:1}, for $i,j\in C_k$ we also have $\xi_{ij}^{(p)} \stackrel{\mathbb{P}}{\longrightarrow}  0$ as $p\rightarrow \infty$.
For $i,t\in C_k\;(i\neq t)$ and $j\in C_{l}\;(k\neq l)$, as $p\rightarrow \infty$,
$$
s_{it}^{(p)}-s_{jt}^{(p)} \stackrel{\mathbb{P}}{\longrightarrow} \frac{\mu_k^2-\mu_l^2+\delta_{kl}^2}{2}.
$$
Conversely, for $i\in C_k$ and $j,t\in C_{l}\;(j\neq t;\;k\neq l)$, as $p\rightarrow \infty$,
$$
s_{it}^{(p)}-s_{jt}^{(p)} \stackrel{\mathbb{P}}{\longrightarrow} \frac{\mu_k^2-\mu_l^2-\delta_{kl}^2}{2}.
$$
For $i \in C_k$ and $j \in C_l\;(k\neq l)$, 
if $\xi_{ij}^{(p)}\stackrel{\mathbb{P}}{\longrightarrow}  0$ as $p\rightarrow \infty$, then we obtain
$$
\delta_{kl}^2=0, 
$$
which contradicts the assumption $\delta_{kl}^2>0$.
Thus, for the inner product matrix $S$, we also obtain the condition where $\xi_{ij}^{(p)}$ converges in probability to some positive constant for $i \in C_k$ and $j \in C_l\;(k\neq l)$. 
From Lemma $\ref{lemma:1}$ and this fact, we obtain the label consistency of the $k$-means type distance vector clustering method based on the inner product matrix $S$.
\end{proof}
%
\subsubsection{Hierarchical clustering type}
For hierarchical clustering with the matrix $\Xi$, 
the label consistency also holds under the same conditions as that of the $k$-means type method.
The following theorem provides sufficient conditions of the label consistency for the distance vector clustering approach using classical hierarchical clustering methods 
(e.g., the single linkage and Ward's method).
%
\begin{prop}
We assume the general assumptions a) - d) and also assume that  $n_k\ge 2\;(k=1,\dots,K)$.
Let $\mathcal{C}_K:=\{C_1,\dots,C_K\}$ be the true cluster partition.
\begin{description}
\item[a) ]
If $\forall k,l\;(k\neq l); \sigma_k\neq \sigma_l$ or $\delta_{kl}^2>0$, 
then 
$$
\mathbb{P}\left(\forall k,l\;(k\neq l);\max_{i,j\in C_k} \xi_{ij}^{(p)} <  \min_{i\in C_k,j\in C_l}\xi_{ij}^{(p)} \right)
\rightarrow 1
 \quad
 \text{as}
 \quad
 p\rightarrow \infty,
$$
where $\xi_{ij}^{(p)}=\sqrt{\sum_{t\neq i, j} (d_{it}^{(p)}-d_{jt}^{(p)})^2}$.
\item[b) ]
Moreover, 
if $\forall k,l\;(k\neq l);\delta_{kl}^2>0$, then 
$$
\mathbb{P}\left(\forall k,l\;(k\neq l);\max_{i,j\in C_k} \xi_{ij}^{(p)} <  \min_{i\in C_k,j\in C_l}\xi_{ij}^{(p)} \right)
\rightarrow 1
 \quad
 \text{as}
 \quad
 p\rightarrow \infty,
$$
where $\xi_{ij}^{(p)}=\sqrt{\sum_{t\neq i, j} (s_{it}^{(p)}-s_{jt}^{(p)})^2}$.
\end{description}
\end{prop}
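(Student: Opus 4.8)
The plan is to reduce this statement to the pairwise limits already obtained in the proof of Proposition~\ref{prop:2}, exploiting that the sample size $N$ is fixed so that every index set appearing below is finite. Concretely, I would show that
$$
\max_{k}\,\max_{i,j\in C_k}\xi_{ij}^{(p)}\stackrel{\mathbb{P}}{\longrightarrow}0
\qquad\text{and}\qquad
\min_{k\neq l}\,\min_{i\in C_k,\,j\in C_l}\xi_{ij}^{(p)}\stackrel{\mathbb{P}}{\longrightarrow}c_{\min}
$$
for some constant $c_{\min}>0$, and then deduce the claimed probability bound by an elementary $\varepsilon$-argument.

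For the first step I would recall from the proof of Proposition~\ref{prop:2} that, for $i,j$ in the same true cluster, $\xi_{ij}^{(p)}\stackrel{\mathbb{P}}{\longrightarrow}0$; and, for $i\in C_k$, $j\in C_l$ with $k\neq l$, I would expand the sum defining $\xi_{ij}^{(p)}$ according to the cluster membership of the running index $t$ (the three cases $t\in C_k$, $t\in C_l$, and $t\in C_m$ with $m\neq k,l$), apply Proposition~\ref{prop:1} to each of the finitely many summands, and invoke the continuous mapping theorem together with Slutsky's lemma to conclude $\xi_{ij}^{(p)}\stackrel{\mathbb{P}}{\longrightarrow}c_{ij}$, where $c_{ij}^2$ is the corresponding finite sum of squared limits. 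The hypothesis $n_k\ge 2$ for every $k$ is exactly what guarantees that the two ``critical'' summands (one with $t\in C_k\setminus\{i\}$, one with $t\in C_l\setminus\{j\}$) are present, and the hypothesis ``$\sigma_k\neq\sigma_l$ or $\delta_{kl}^2>0$'' (respectively $\delta_{kl}^2>0$ in part~b)) then forces $c_{ij}>0$ by the same contradiction already spelled out in the proof of Proposition~\ref{prop:2}; the extra summands coming from a third cluster $C_m$ only increase $c_{ij}^2$ and hence cannot destroy positivity.

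Given these pairwise limits, since there are only finitely many within-cluster and between-cluster pairs, the two displayed convergences above follow, with $c_{\min}:=\min_{k\neq l}\min_{i\in C_k,\,j\in C_l}c_{ij}>0$. Fixing any $\varepsilon\in(0,c_{\min}/2)$, the event $\{\max_{k}\max_{i,j\in C_k}\xi_{ij}^{(p)}<\varepsilon\}\cap\{\min_{k\neq l}\min_{i\in C_k,j\in C_l}\xi_{ij}^{(p)}>c_{\min}-\varepsilon\}$ has probability tending to $1$, and on this event $\max_{k}\max_{i,j\in C_k}\xi_{ij}^{(p)}<\varepsilon<c_{\min}-\varepsilon<\min_{k\neq l}\min_{i\in C_k,j\in C_l}\xi_{ij}^{(p)}$, which is precisely the event in the statement. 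Part~b) is proved identically, using instead the inner-product limits in Proposition~\ref{prop:1}~v) and~iii) and the computations of $s_{it}^{(p)}-s_{jt}^{(p)}$ from the proof of Proposition~\ref{prop:2}, where now positivity of $c_{ij}$ follows from $\delta_{kl}^2>0$.

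I do not anticipate a genuine obstacle here: all of the probabilistic content is inherited from Proposition~\ref{prop:1} through the proof of Proposition~\ref{prop:2}, and what remains is the finite-sample bookkeeping described above. The only point that requires slightly more care than in Proposition~\ref{prop:2} is the limit of $d_{it}^{(p)}-d_{jt}^{(p)}$ (respectively $s_{it}^{(p)}-s_{jt}^{(p)}$) when $t$ lies in a third cluster, which was not written out there; but since such a term enters only additively into $c_{ij}^2$, it is harmless for the lower bound and may simply be discarded when arguing $c_{ij}>0$.
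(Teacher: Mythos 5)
Your proof is correct and follows essentially the same route as the paper, which simply states that the proof is equivalent to that of Proposition~2: the pairwise limits $\xi_{ij}^{(p)}\to 0$ within clusters and $\xi_{ij}^{(p)}\to c_{ij}>0$ between clusters are exactly what that proof establishes, and your finite-max/min $\varepsilon$-argument is the standard bookkeeping needed to pass to the stated event. If anything, you supply details the paper leaves implicit (the third-cluster summands and the uniformity over finitely many pairs), which is a welcome clarification rather than a deviation.
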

\begin{proof}
The proof of this proposition is equivalent to the proof of Proposition $\ref{prop:2}$.
\end{proof}
%

To compare the sufficient condition of MDP clustering, 
we consider the case where the number of clusters is two.
In this case, the sufficient condition of proposed approach using the distance matrix 
is given by 
$$
 \sigma_1\neq \sigma_2 \quad\text{or}\quad \delta_{12}>0.
$$
Moreover, 
the sufficient condition of the proposed approach using the inner product matrix 
is given by 
$$
\delta_{12}>0.
$$
Thus, 
if we use the distance matrix, we can detect the differences between variances or mean vectors.
Alternatively, 
if we use the inner product matrix, we only focus on the differences between mean vectors.
Moreover, 
the sufficient conditions of our approach do not depend on the sample size.
The sufficient condition of our approach using the inner product matrix dose not depend on variances.
In fact, the following example shows that the proposal clusterings with $S$ and $D$ works well, but MDP clustering does not.
\begin{example}\label{example:3}
Let $\bm{\mu}=(0.1,\dots,0.1)^T\in \mathbb{R}^p$ and $\bm{X}_1^{(p)}$ be a sample point drawn from the standard $p$-dimensional normal distribution $N_p(\bm{\mu},I_p)$.
Let $\bm{X}_2^{(p)}$ be a sample point independently drawn from $N_p(-\bm{\mu},1.5\times I_p)$.
Let $\bm{U}_i\;(i=1,\cdots,5)$ be i.i.d. copies of  $\bm{X}_1^{(p)}$ and 
$\bm{V}_i\;(i=1,\cdots,5)$ be i.i.d. copies of  $\bm{X}_2^{(p)}$.
Here, we set $p=2000$.
Write $X:=(\bm{U}_1,\dots,\bm{U}_{5},\bm{V}_1,\dots,\bm{V}_{5})^T$.
In this setting, 
the conditions for the label consistency and the approximation algorithm for MDP clustering does not hold 
while the conditions for the consistency of the proposed approach using $S$ and $D$ hold.
Figure \ref{fig:4} shows the results of these methods. 
Figure \ref{fig:4}, we can see that the proposed approach using $S$ and $D$ works well, but MDP clustering does not.
%
\begin{figure}[!t]
\begin{minipage}{0.3\textwidth}
\begin{center}
\includegraphics[scale=0.65]{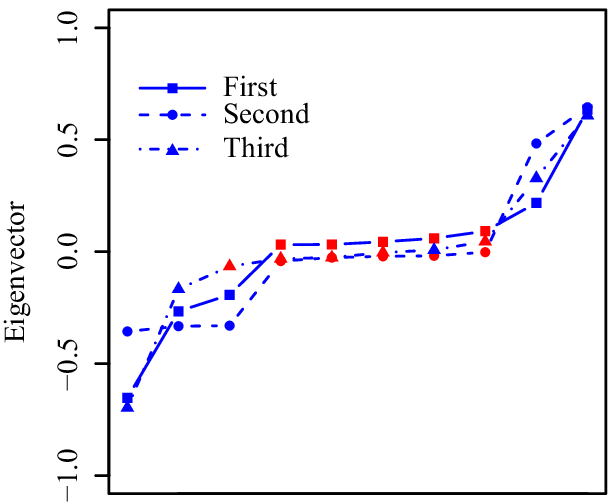}
\par{(a)}
\end{center}
\end{minipage}
\hfill
\begin{minipage}{0.3\textwidth}
\begin{center}
\includegraphics[scale=0.7]{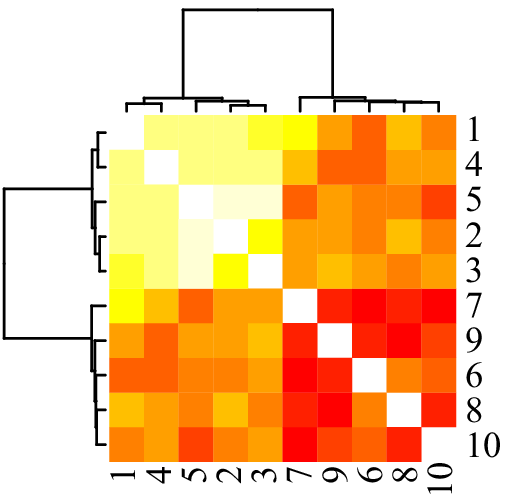}
\par{(b)}
\end{center}
\end{minipage}
\hfill
\begin{minipage}{0.3\textwidth}
\begin{center}
\includegraphics[scale=0.7]{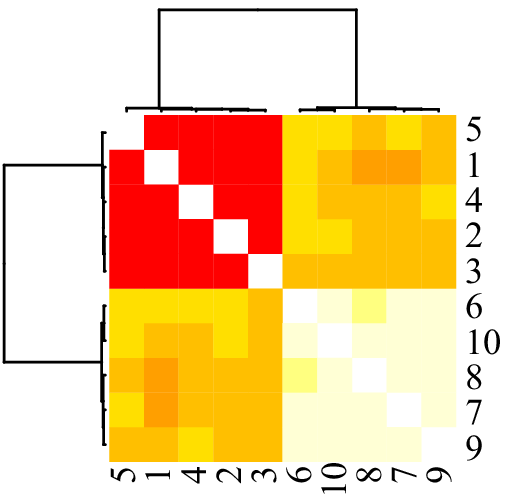}
\par{(c)}
\end{center}
\end{minipage}
\caption{Results of three methods: (a) MDP clustering (the first three sorted eigenvectors of $S$, the colors of the points represent the cluster label)
and the proposed approach using (b) the inner product matrix $S$ and (c) the distance matrix $D$.}
\label{fig:4}
\end{figure}
%
\end{example}

%
\section{Simulation study}\label{section:4}%
%
In this section, 
we illustrate the performance of the proposed approach via numerical experiments.
Here, to compare the proposed approach, 
we choose Ward's method, $k$-means clustering, sparse $k$-means (SK-means) clustering (Witten and Tibshirani, 2010), 
and MDP clustering.
For the proposed approach, we use the Ward type and $k$-means type distance vector clustering methods with the inner product matrix $S$ and the distance matrix $D$. 
Here, we refer to the $k$-means type distance vector clustering methods using $S$ and $D$ as DSKM and DDKM, respectively.
Similarly, we refer to the Ward type distance vector clustering methods using $S$ and $D$ as DSW and DDW, respectively.

In these experiments,
we set the true number of clusters $K=3$, the number of variables $p=1000$ and $2000$, and the sample size $N=100$.
For $\mu>0$, we set 
$$
\bm{\mu}_{1}=(\underbrace{0,\dots,0}_{p}),\;\bm{\mu}_{2}=(\underbrace{\mu,\dots,\mu}_{150},\underbrace{0,\dots,0}_{p-150}),\;
\text{and } \bm{\mu}_{3}=-\bm{\mu}_2
$$
as the centers of the three clusters.
Observations $\bm{X}_i^{(p)}=[X_{i1},\dots,X_{ip}]^T\;(i=1,\dots,N)$ are generated as 
$$
\bm{X}_i^{(p)}:= \sum_{k=1}^3u_{ik}(\bm{\mu}_{k}+\bm{\epsilon}_{ik}),
$$
where $\bm{u}_i=(u_{i1},u_{i2},u_{i3})$ and $\bm{\epsilon}_{ik}\;(i=1,\dots,n)$ are independently generated 
from the multinomial distribution for three trials with probabilities $\bm{\pi}=(\pi_{1},\pi_{2},\pi_{3})$
and the $p$-dimensional normal distribution $N_p(\bm{0}, \Sigma_{k})$, respectively.
In this experiment, we use the following four settings:
\begin{description}
\item{\bf Setting I. } 
Let $\mu=0.6,\;0.8,\;1$ and $\Sigma_k:=I_p\;(k=1,2,3)$, where $I_p$ is the $p\times p$ identity matrix.
Set $\bm{\pi}=(0.7,0.2,0.1)$ and $(1/3,1/3,1/3)$.
\item{\bf Setting II. } 
Let $\mu=0.6,\;0.8,\;1$ and 
$$
\Sigma_k:=
\begin{bmatrix}
\tilde{\Sigma}_{k} & O_{150\times (p-150)} \\
O_{(p-150)\times 150} & I_{(p-150)}
\end{bmatrix}\quad(k=1,2,3),
$$
where $\tilde{\Sigma}_{k}:=(\rho_{k}^{|s-t|/3} )_{150\times 150}$ 
and $\rho_{k}\;(k=1,2,3)$ are generated from the uniform distribution on the interval $[0.1,0.9]$.
Set $\bm{\pi}=(0.7,0.2,0.1)$ and $(1/3,1/3,1/3)$.
\item{\bf Setting III. } 
Let $\mu=0.5$ and $\Sigma_1:=\sigma^2\times I_p,\;\Sigma_2:=I_p$ and $\Sigma_3:= I_p$,
where $\sigma^2=1.5,\;2.0,\;2.5$.
Set $\bm{\pi}=(0.7,0.2,0.1)$ and $(1/3,1/3,1/3)$.
\item{\bf Setting IV. } 
Let $\mu=0.5$ and
$$
\Sigma_k:=
\begin{bmatrix}
\tilde{\Sigma}_{k} & O_{150\times (p-150)} \\
O_{(p-150)\times 150} & I_{(p-150)}
\end{bmatrix}\quad(k=1,2,3),
$$
where $\tilde{\Sigma}_{1}:=(\tilde{\sigma}_{1st} )_{150\times 150}$,
$$
\tilde{\sigma}_{1st} := 
\begin{cases}
\sigma^2	& (s=t)\\
\rho_{1}^{|s-t|/3}	& (s\neq t)
\end{cases}\quad(\sigma^2=1.5,\;2.0,\;2.5),
$$
$\tilde{\Sigma}_{k}:=(\rho_{k}^{|s-t|/3} )_{150\times 150}\;(k=2,3)$ and $\rho_{k}\;(k=1,2,3)$ are generated from the uniform distribution on the interval $[0.1,0.9]$.
Set $\bm{\pi}=(0.7,0.2,0.1)$ and $(1/3,1/3,1/3)$.
\end{description}

We constructed $100$ datasets for each setting and applied the eight methods to each standardized dataset with zero means and unit variances.
To compare the results of the eight clustering methods, we used the adjusted rand index (ARI) (Hubert and Arabie, 1985). 
Note that we applied to MDP clustering with the turning parameters $T=1,\;2,\;3$ and $G=\min\{5,n_1,n_2,n_3\}$,
and employed the best ARI score as the ARI score listed for the MDP clustering results.

Table $\ref{table:SI}$ shows the average ARI scores and their standard errors for each method in Setting I.
In Setting I, there are only differences between mean vectors of the three clusters.
Even if $\mu=0.6$, MDP clustering and the distance vector clustering approach using the inner product matrix work well.
Since DDW and DDKM focus on differences between both mean vectors and variances,
these methods do not work well in this setting.
Table $\ref{table:SII}$ shows the average ARI scores and their standard errors for each method in Setting II.
In Setting II, there are also only differences between mean vectors of the three clusters but informative variables are correlated.
The data for Setting II show a similar tendency to that of Setting I, although the ARI scores of Setting II are overall lower than those of Setting I.
\begin{table}[h]
\caption{Average ARI scores and their standard errors for each method in Setting I.}
\label{table:SI}
\begin{tabular}{c|c|c|c|ccc}
\hline
Setting			& $p$		&$\bm{\pi}$	&method			& $\mu=0.6$	& $\mu=0.8$	& $\mu=1.0$	\\
\hline\hline
\multirow{32}{*}{I}	& \multirow{16}{*}{1000}	&\multirow{8}{*}{\begin{rotatebox}{90}{(0.7,0.2,0.1)}\end{rotatebox}}
									&	Ward		& 0.403 (0.167) & 0.842 (0.151) & 0.974 (0.026) 	\\
				&			&		&	$K$-means	& 0.597 (0.234) & 0.897 (0.204) & 0.999 (0.004)  	\\
				&			&		&	SK-means		& 0.621 (0.278) & 0.930 (0.289) & 1.000 (0.000)   	\\
				&			&		&	MDP			& 0.875 (0.127) & 0.992 (0.026) & 1.000 (0.000)   	\\
				&			&		&	DSW			& 0.935 (0.072) & 0.998 (0.008) & 1.000 (0.000)    	\\
				&			&		&	DSKM		& 0.919 (0.113) & 0.990 (0.066) & 1.000 (0.000)  	\\
				&			&		&	DDW		& 0.259 (0.183) & 0.611 (0.251) & 0.986 (0.078)   	\\
				&			&		&	DDKM		& 0.257 (0.121) & 0.552 (0.159) & 0.940 (0.070)  	\\ \cline{3-7}
				&					&\multirow{8}{*}{\begin{rotatebox}{90}{(1/3,1/3,1/3)}\end{rotatebox}}
									&	Ward		& 0.599 (0.085) & 0.820 (0.069) & 0.943 (0.046)	\\
				&			&		&	$K$-means	& 0.954 (0.035) & 0.998 (0.008) & 1.000 (0.000) 	\\
				&			&		&	SK-means		& 0.987 (0.020) & 0.999 (0.005) & 1.000 (0.000)  	\\
				&			&		&	MDP			& 0.927 (0.102) & 0.997 (0.009) & 1.000 (0.000)  	\\
				&			&		&	DSW			& 0.987 (0.019) & 1.000 (0.000) & 1.000 (0.000)   	\\
				&			&		&	DSKM		& 0.998 (0.008) & 1.000 (0.000) & 1.000 (0.000)  	\\
				&			&		&	DDW		& 0.780 (0.163) & 0.997 (0.011) & 0.999 (0.004)  	\\
				&			&		&	DDKM		& 0.638 (0.108) & 0.964 (0.037) & 0.997 (0.009) 	\\ 
\cline{2-7}
				& \multirow{16}{*}{2000}	&\multirow{8}{*}{\begin{rotatebox}{90}{(0.7,0.2,0.1)}\end{rotatebox}}
									&	Ward		& 0.170 (0.110) & 0.609 (0.211) & 0.852 (0.120) 	\\
				&			&		&	$K$-means	& 0.390 (0.139) & 0.665 (0.276) & 0.953 (0.158)  	\\
				&			&		&	SK-means		& 0.345 (0.230) & 0.845 (0.252) & 0.977 (0.115)   	\\
				&			&		&	MDP			& 0.627 (0.209) & 0.966 (0.070) & 0.997 (0.010)   	\\
				&			&		&	DSW			& 0.677 (0.240) & 0.991 (0.019) & 1.000 (0.000)    	\\
				&			&		&	DSKM		& 0.588 (0.284) & 0.985 (0.060) & 0.988 (0.084)  	\\
				&			&		&	DDW		& 0.072 (0.069) & 0.335 (0.193) & 0.650 (0.271)   	\\
				&			&		&	DDKM		& 0.057 (0.050) & 0.342 (0.131) & 0.613 (0.186)  	\\ \cline{3-7}
				&					&\multirow{8}{*}{\begin{rotatebox}{90}{(1/3,1/3,1/3)}\end{rotatebox}}
									&	Ward		& 0.458 (0.079) & 0.666 (0.091) & 0.820 (0.058)	\\
				&			&		&	$K$-means	& 0.817 (0.099) & 0.976 (0.029) & 0.996 (0.012) 	\\
				&			&		&	SK-means		& 0.989 (0.017) & 0.999 (0.004) & 1.000 (0.000)  	\\
				&			&		&	MDP			& 0.735 (0.150) & 0.956 (0.073) & 0.995 (0.011)  	\\
				&			&		&	DSW			& 0.943 (0.040) & 0.999 (0.006) & 1.000 (0.000)   	\\
				&			&		&	DSKM		& 0.986 (0.019) & 0.999 (0.004) & 1.000 (0.000)  	\\
				&			&		&	DDW		& 0.487 (0.147) & 0.896 (0.124) & 0.993 (0.027)  	\\
				&			&		&	DDKM		& 0.450 (0.072) & 0.761 (0.117) & 0.949 (0.044) 	\\ 
\hline
\end{tabular}
\end{table}

\begin{table}[h]
\caption{Average ARI scores and their standard errors for each method in Setting II.}
\label{table:SII}
\begin{tabular}{c|c|c|c|ccc}
\hline
Setting			& $p$		&$\bm{\pi}$	&method			& $\mu=0.6$	& $\mu=0.8$	& $\mu=1.0$	\\
\hline\hline
\multirow{32}{*}{II}	& \multirow{16}{*}{1000}	&\multirow{8}{*}{\begin{rotatebox}{90}{(0.7,0.2,0.1)}\end{rotatebox}}
									&	Ward		& 0.260 (0.152) & 0.607 (0.171) & 0.807 (0.092) 	\\
				&			&		&	$K$-means	& 0.311 (0.164) & 0.723 (0.200) & 0.898 (0.119)  	\\
				&			&		&	SK-means		& 0.172 (0.145) & 0.667 (0.255) & 0.894 (0.128)   	\\
				&			&		&	MDP			& 0.523 (0.164) & 0.730 (0.177) & 0.902 (0.094)   	\\
				&			&		&	DSW			& 0.496 (0.192) & 0.818 (0.144) & 0.925 (0.093)    	\\
				&			&		&	DSKM		& 0.471 (0.163) & 0.858 (0.108) & 0.933 (0.081)  	\\
				&			&		&	DDW		& 0.202 (0.116) & 0.420 (0.196) & 0.803 (0.178)   	\\
				&			&		&	DDKM		& 0.213 (0.110) & 0.430 (0.153) & 0.786 (0.132)  	\\ \cline{3-7}
				&					&\multirow{8}{*}{\begin{rotatebox}{90}{(1/3,1/3,1/3)}\end{rotatebox}}
									&	Ward		& 0.403 (0.070) & 0.642 (0.091) & 0.824 (0.059)	\\
				&			&		&	$K$-means	& 0.472 (0.093) & 0.860 (0.072) & 0.938 (0.041) 	\\
				&			&		&	SK-means		& 0.467 (0.091) & 0.867 (0.062) & 0.940 (0.041)  	\\
				&			&		&	MDP			& 0.440 (0.131) & 0.779 (0.160) & 0.910 (0.119)  	\\
				&			&		&	DSW			& 0.541 (0.102) & 0.870 (0.078) & 0.961 (0.051)   	\\
				&			&		&	DSKM		& 0.570 (0.094) & 0.905 (0.052) & 0.952 (0.039)  	\\
				&			&		&	DDW		& 0.407 (0.104) & 0.790 (0.127) & 0.914 (0.083)  	\\
				&			&		&	DDKM		& 0.431 (0.087) & 0.788 (0.087) & 0.868 (0.072) 	\\ 
\cline{2-7}
				& \multirow{16}{*}{2000}	&\multirow{8}{*}{\begin{rotatebox}{90}{(0.7,0.2,0.1)}\end{rotatebox}}
									&	Ward		& 0.064 (0.045) & 0.301 (0.149) & 0.442 (0.167) 	\\
				&			&		&	$K$-means	& 0.064 (0.029) & 0.295 (0.123) & 0.483 (0.180)  	\\
				&			&		&	SK-means		& 0.062 (0.034) & 0.157 (0.103) & 0.365 (0.195)   	\\
				&			&		&	MDP			& 0.067 (0.076) & 0.537 (0.156) & 0.538 (0.136)   	\\
				&			&		&	DSW			& 0.099 (0.062) & 0.386 (0.182) & 0.510 (0.187)    	\\
				&			&		&	DSKM		& 0.099 (0.055) & 0.433 (0.138) & 0.546 (0.175)  	\\
				&			&		&	DDW		& 0.126 (0.075) & 0.242 (0.127) & 0.322 (0.188)   	\\
				&			&		&	DDKM		& 0.133 (0.055) & 0.241 (0.108) & 0.387 (0.172)  	\\ \cline{3-7}
				&					&\multirow{8}{*}{\begin{rotatebox}{90}{(1/3,1/3,1/3)}\end{rotatebox}}
									&	Ward		& 0.280 (0.066) & 0.510 (0.081) & 0.601 (0.106)	\\
				&			&		&	$K$-means	& 0.341 (0.081) & 0.727 (0.081) & 0.794 (0.072) 	\\
				&			&		&	SK-means		& 0.298 (0.099) & 0.776 (0.073) & 0.788 (0.065)  	\\
				&			&		&	MDP			& 0.253 (0.118) & 0.581 (0.180) & 0.651 (0.174)  	\\
				&			&		&	DSW			& 0.362 (0.077) & 0.744 (0.095) & 0.788 (0.093)   	\\
				&			&		&	DSKM		& 0.384 (0.076) & 0.812 (0.054) & 0.818 (0.074)  	\\
				&			&		&	DDW		& 0.284 (0.096) & 0.547 (0.140) & 0.684 (0.122)  	\\
				&			&		&	DDKM		& 0.292 (0.069) & 0.540 (0.096) & 0.664 (0.088) 	\\ 
\hline
\end{tabular}
\end{table}

Table $\ref{table:SIII}$ shows the average ARI scores and their standard errors for each method in Setting III.
In Setting III, there are both differences among mean vectors of the three clusters and
the variances of the first cluster having a zero mean vector are larger than those of other clusters.
Thus, it is difficult to detect differences between mean vectors by usual clustering methods.
In fact, $K$-means, SK-means, and MDP clusterings do not work well overall, 
while the distance vector clustering approach using the distance matrix does.
Moreover, the distance vector clustering approach using the inner product matrix appears unaffected by the variances,
while MDP clustering does not work well.
In fact, the ARI scores of DSW with $\sigma^2=1.5,2.0,2.5$ are approximately similar in each case.
From this fact, we can confirm that the sufficient condition of the label consistency for the distance vector clustering approach using the inner product matrix 
does not depend on the variances and the sample size.
Note that for the distance vector clustering approach using the inner product matrix, the Ward type algorithm performs better than  the $k$-means type algorithm.
Conersely, 
for the distance vector clustering approach using the distance matrix, 
 the Ward type algorithm works well for balanced cases ($\bm{\pi}=(1/3,1/3,1/3)$), while the $k$-means type algorithm does not, 
 and 
 the $k$-means type algorithm works well for unbalanced cases ($\bm{\pi}=(0.7,0.2,0.1)$), while the Ward  type algorithm does not.
 In addition, we mention that in this setting,
we clearly obtain the conditions
$$
2\sigma_2^2=2\sigma_3^2 < \delta_{23}^2 + \sigma_2^2 + \sigma_3^2<\delta_{12}^2 + \sigma_1^2 + \sigma_2^2=\delta_{13}^2 + \sigma_1^2 + \sigma_3^2.
$$
Thus, the results of Ward's method are also acceptable in this setting.
For more details about the asymptotic behaviors of Ward's method for high-dimensional data, see Borysov et al. (2013).
Setting IV is similar to Setting III, but informative variables are correlated.
As with Setting I and II,
the data for Setting IV given in Table $\ref{table:SIV}$ show a similar tendency to that of Setting III, although the ARI scores of Setting IV are overall lower than those of Setting III.
\begin{table}[h]
\caption{Average ARI scores and their standard errors for each method in Setting III.}
\label{table:SIII}
\begin{tabular}{c|c|c|c|ccc}
\hline
Setting			& $p$		&$\bm{\pi}$	&method			& $\sigma^2=2.5$	& $\sigma^2=2$	& $\sigma^2=1.5$	\\
\hline\hline
\multirow{32}{*}{III}	& \multirow{16}{*}{1000}	&\multirow{8}{*}{\begin{rotatebox}{90}{(0.7,0.2,0.1)}\end{rotatebox}}
									&	Ward		&  0.518 (0.077) &  0.478 (0.165) & 0.386 (0.166) 	\\
				&			&		&	$K$-means	& -0.096 (0.077) &  0.032 (0.062) & 0.096 (0.048)  	\\
				&			&		&	SK-means		& -0.055 (0.080) &  0.020 (0.057) & 0.080 (0.053)   	\\
				&			&		&	MDP			& -0.122 (0.018) & -0.122 (0.018) & 0.077 (0.128)   	\\
				&			&		&	DSW			&  0.583 (0.267) &  0.491 (0.286) & 0.538 (0.272)    	\\
				&			&		&	DSKM		&  0.072 (0.069) &  0.079 (0.077) & 0.212 (0.137)  	\\
				&			&		&	DDW		&  0.502 (0.065) &  0.486 (0.055) & 0.499 (0.063)   	\\
				&			&		&	DDKM		&  0.746 (0.224) &  0.747 (0.230) & 0.568 (0.191)  	\\ \cline{3-7}
				&					&\multirow{8}{*}{\begin{rotatebox}{90}{(1/3,1/3,1/3)}\end{rotatebox}}
									&	Ward		& 0.992 (0.013) & 0.994 (0.014) & 0.980 (0.040)	\\
				&			&		&	$K$-means	& 0.480 (0.069) & 0.499 (0.078) & 0.556 (0.076) 	\\
				&			&		&	SK-means		& 0.481 (0.068) & 0.506 (0.077) & 0.620 (0.105)  	\\
				&			&		&	MDP			& 0.416 (0.189) & 0.503 (0.085) & 0.531 (0.087)  	\\
				&			&		&	DSW			& 0.988 (0.023) & 0.973 (0.037) & 0.922 (0.045)   	\\
				&			&		&	DSKM		& 0.805 (0.144) & 0.889 (0.069) & 0.945 (0.036)  	\\
				&			&		&	DDW		& 1.000 (0.000) & 1.000 (0.020) & 0.995 (0.020)  	\\
				&			&		&	DDKM		& 0.577 (0.048) & 0.574 (0.188) & 0.868 (0.188) 	\\ 
\cline{2-7}
				& \multirow{16}{*}{2000}	&\multirow{8}{*}{\begin{rotatebox}{90}{(0.7,0.2,0.1)}\end{rotatebox}}
									&	Ward		&  0.488 (0.069) &  0.491 (0.059) &  0.137 (0.228) 	\\
				&			&		&	$K$-means	& -0.147 (0.014) & -0.137 (0.020) &  0.018 (0.075)  	\\
				&			&		&	SK-means		& -0.111 (0.059) & -0.084 (0.062) &  0.017 (0.061)   	\\
				&			&		&	MDP			& -0.126 (0.018) & -0.126 (0.017) & -0.114 (0.026)   	\\
				&			&		&	DSW			&  0.574 (0.214) &  0.416 (0.192) &  0.243 (0.206)    	\\
				&			&		&	DSKM		& -0.050 (0.074) &  0.023 (0.064) &  0.074 (0.059)  	\\
				&			&		&	DDW		&  0.471 (0.053) &  0.496 (0.064) &  0.490 (0.056)   	\\
				&			&		&	DDKM		&  0.779 (0.218) &  0.803 (0.206) &  0.798 (0.209)  	\\ \cline{3-7}
				&					&\multirow{8}{*}{\begin{rotatebox}{90}{(1/3,1/3,1/3)}\end{rotatebox}}
									&	Ward		& 0.463 (0.032) & 0.731 (0.250) & 0.938 (0.068)	\\
				&			&		&	$K$-means	& 0.054 (0.035) & 0.442 (0.136) & 0.510 (0.093) 	\\
				&			&		&	SK-means		& 0.051 (0.036) & 0.438 (0.144) & 0.532 (0.141)  	\\
				&			&		&	MDP			& 0.105 (0.053) & 0.129 (0.169) & 0.478 (0.115)  	\\
				&			&		&	DSW			& 0.998 (0.009) & 0.964 (0.033) & 0.835 (0.069)   	\\
				&			&		&	DSKM		& 0.653 (0.130) & 0.698 (0.128) & 0.823 (0.123)  	\\
				&			&		&	DDW		& 0.814 (0.244) & 0.828 (0.237) & 0.880 (0.184)  	\\
				&			&		&	DDKM		& 0.568 (0.046) & 0.576 (0.048) & 0.567 (0.062) 	\\ 
\hline
\end{tabular}
\end{table}

\begin{table}[h]
\caption{Average ARI scores and their standard errors for each method in Setting IV.}
\label{table:SIV}
\begin{tabular}{c|c|c|c|ccc}
\hline
Setting			& $p$		&$\bm{\pi}$	&method			& $\sigma^2=2.5$	& $\sigma^2=2$	& $\sigma^2=1.5$	\\
\hline\hline
\multirow{32}{*}{IV}	& \multirow{16}{*}{1000}	&\multirow{8}{*}{\begin{rotatebox}{90}{(0.7,0.2,0.1)}\end{rotatebox}}
									&	Ward		&  0.508 (0.077) &  0.477 (0.171) & 0.258 (0.143) 	\\
				&			&		&	$K$-means	& -0.004 (0.081) &  0.036 (0.035) & 0.079 (0.044)  	\\
				&			&		&	SK-means		& -0.001 (0.069) &  0.026 (0.035) & 0.054 (0.043)   	\\
				&			&		&	MDP			& -0.112 (0.042) & -0.079 (0.065) & 0.119 (0.106)   	\\
				&			&		&	DSW			&  0.416 (0.242) &  0.338 (0.200) & 0.245 (0.158)    	\\
				&			&		&	DSKM		&  0.077 (0.057) &  0.106 (0.107) & 0.157 (0.100)  	\\
				&			&		&	DDW		&  0.483 (0.055) &  0.490 (0.058) & 0.495 (0.090)   	\\
				&			&		&	DDKM		&  0.768 (0.217) &  0.768 (0.221) & 0.497 (0.130)  	\\ \cline{3-7}
				&					&\multirow{8}{*}{\begin{rotatebox}{90}{(1/3,1/3,1/3)}\end{rotatebox}}
									&	Ward		& 0.918 (0.075) & 0.957 (0.037) & 0.863 (0.105)	\\
				&			&		&	$K$-means	& 0.435 (0.071) & 0.485 (0.070) & 0.498 (0.077) 	\\
				&			&		&	SK-means		& 0.417 (0.089) & 0.480 (0.089) & 0.485 (0.099)  	\\
				&			&		&	MDP			& 0.361 (0.154) & 0.447 (0.095) & 0.347 (0.172)  	\\
				&			&		&	DSW			& 0.656 (0.131) & 0.696 (0.126) & 0.579 (0.099)   	\\
				&			&		&	DSKM		& 0.510 (0.081) & 0.610 (0.107) & 0.635 (0.089)  	\\
				&			&		&	DDW		& 0.930 (0.087) & 0.977 (0.041) & 0.919 (0.087)  	\\
				&			&		&	DDKM		& 0.586 (0.066) & 0.586 (0.072) & 0.884 (0.124) 	\\ 
\cline{2-7}
				& \multirow{16}{*}{2000}	&\multirow{8}{*}{\begin{rotatebox}{90}{(0.7,0.2,0.1)}\end{rotatebox}}
									&	Ward		&  0.488 (0.079) &  0.501 (0.073) &  0.136 (0.121) 	\\
				&			&		&	$K$-means	&  0.018 (0.053) & -0.019 (0.072) &  0.031 (0.036)  	\\
				&			&		&	SK-means		&  0.031 (0.036) &  0.009 (0.049) &  0.035 (0.033)   	\\
				&			&		&	MDP			& -0.116 (0.046) & -0.119 (0.018) &  0.056 (0.067)   	\\
				&			&		&	DSW			&  0.146 (0.130) &  0.248 (0.175) &  0.141 (0.109)    	\\
				&			&		&	DSKM		&  0.066 (0.046) &  0.064 (0.056) &  0.099 (0.076)  	\\
				&			&		&	DDW		&  0.484 (0.067) &  0.484 (0.053) &  0.503 (0.085)   	\\
				&			&		&	DDKM		&  0.894 (0.121) &  0.871 (0.149) &  0.875 (0.136)  	\\ \cline{3-7}
				&					&\multirow{8}{*}{\begin{rotatebox}{90}{(1/3,1/3,1/3)}\end{rotatebox}}
									&	Ward		& 0.529 (0.154) & 0.627 (0.222) & 0.685 (0.136)	\\
				&			&		&	$K$-means	& 0.288 (0.095) & 0.396 (0.103) & 0.373 (0.086) 	\\
				&			&		&	SK-means		& 0.210 (0.100) & 0.289 (0.127) & 0.276 (0.110)  	\\
				&			&		&	MDP			& 0.120 (0.100) & 0.171 (0.170) & 0.246 (0.166)  	\\
				&			&		&	DSW			& 0.442 (0.151) & 0.682 (0.140) & 0.377 (0.087)   	\\
				&			&		&	DSKM		& 0.356 (0.091) & 0.538 (0.116) & 0.417 (0.185)  	\\
				&			&		&	DDW		& 0.722 (0.160) & 0.747 (0.180) & 0.751 (0.162)  	\\
				&			&		&	DDKM		& 0.569 (0.048) & 0.577 (0.047) & 0.633 (0.137) 	\\ 
\hline
\end{tabular}
\end{table}

Consequently, 
the distance vector clustering approach show competitive performance in these numerical experiments.
%
\section{Application to Microarray Data}\label{section:5}%
%
Here, we apply the distance vector clustering approach to the preprocessed microarray gene expression datasets,
which are used in Dettling (2004).
These datasets are available at \url{http://stat.ethz.ch/~dettling/bagboost.html}.
For details about these datasets, see Dettling (2004).
As with Section $\ref{section:4}$, to compare the proposed methods with other clustering methods, 
we again chose Ward's method, $k$-means clustering, SK-means clustering, and MDP clustering as competitors.
We used the hierarchical type (Ward's and the single linkage methods) and $k$-means type distance vector clustering methods with the inner product matrix $S$ and the distance matrix $D$ 
as the proposed approach. 
Here, we refer to the $k$-means type distance vector clustering method using $S$ and $D$ as DSKM and DDKM, respectively.
Similarly, we also refer to the Ward (the single linkage) type distance vector clustering method using $S$ and $D$ as DSW (DSS) and DDW (DDS), respectively.
According to the suggestion in Ahn et al. (2013), 
we set $T=2$ and $G=5$ as the tuning parameters of the MDP clustering method.
Note that we fixed the number of clusters for each algorithm to make a straightforward comparison.

Table $\ref{table:1}$ shows the number of errors  for each method and each dataset.
From this table,
it appears that the results of the distance vector clustering approach are also competitive among the compared methods for real data.
\begin{table}[!h]
\caption{Clustering results of four preprocessed microarray gene expression datasets showing the number of errors for each dataset and each clustering method.}
\begin{tabular}{c|c|c|c|cccccccc}
\hline
Data 	&$N$& $p$	&$K$ &	Ward	&	$K$-means	&	SK-means		&	MDP 	 \\
\hline\hline
Colon	& 62	& 2000	& 2	 &	30		&	30			&	 10			&	30		\\
Leukemia	& 72	& 3571	& 2	 &	6		&	2			&	 2			&	36		\\
Lymphoma& 62& 4026	& 3	 &	1		&	1			&	 1			&	0		\\
Prostate	&102& 6033	& 2	 &	44		&	43			&	 41			&	42		\\
\hline
Data 	&$N$& $p$	&$K$ &DSW (DSS)	&	DSKM		&	DDW (DDS)	&	DDKM 	 \\
\hline\hline
Colon	& 62	& 2000	& 2	 &	31 (26)	&	30			&	 20 (24)		&	17		\\
Leukemia	& 72	& 3571	& 2	 &	4 (1)		&	3			&	 1 (26)		&	1		\\
Lymphoma& 62& 4026	& 3	 &	2 (11)	&	1			&	 2 (22)		&	1		\\
Prostate	&102& 6033	& 2	 &	44 (39)	&	43			&	 44 (45)		&	40		\\
\hline
\end{tabular}
\label{table:1}
\end{table}

We mention here that there are some differences between our table and the results in Ahn et al. (2013).
For example, in Ahn et al. (2013), the number of errors for the MDP clustering method for Colon data is $15$.
However, in this work,  the number of errors for the MDP clustering method for Colon data is $30$.
These differences may be because of differences in data preprocessing.
For the preprocessed data used in this study,  
the MDP distance for the split induced by the largest gap of the discarded first eigenvector is $6.970$ while
the MDP distance with the discarded second eigenvector is $6.551$. 
According to the algorithm of MDP clustering,
we choose the split which has the largest MDP distance.
Thus, we must choose the split induced by the first eigenvector, while the number of errors of the second eigenvector is $15$.


%
%
\section{Conclusion}\label{section:6}%
%
In this study, 
we pointed out the important fact that it is not the closeness, 
but the ``{\it values}" of distance that contain information of the cluster structure in high-dimensional space.
We proposed an efficient and simple clustering approach, called distance vector clustering, for HDLSS data based on that fact.
Under the assumption of Hall et al. (2005),
we showed that the proposed approach provides the true cluster label under milder conditions
when the dimension tends to infinity with the sample size fixed.
The effectiveness of the distance vector clustering approach was illustrated through numerical experiments and real data analysis.
Under some regularity conditions, in HDLSS data, 
we can detect the cluster structure, which consists of differences not only between mean vectors but also variances.
Moreover, we also showed that the distance vector clustering approach using the inner product matrix is less susceptible to the variances of hidden clusters than in the MDP clustering method.
Only the distance or the inner product matrix  and the usual clustering algorithm are needed for the distance vector clustering approach.
Thus, the distance vector clustering approach is easily implementable and understandable.
It can be considered from the present results that this approach is another possible choice for clustering HDLSS data.

In future work, 
we intend to provide an efficient selection method for the determination of the number of clusters by this method.
%
\section*{Acknowledgements}
The author wishes to express his thanks to Dr. Shota Katayama for his helpful discussions.
This work was supported by Grant-in-Aid for JSPS Fellows Number $24\cdot2466$.

\end{document}